\documentclass{article} 

\usepackage[hyperindex=true,pageanchor=true,hyperfigures=true,backref=false]{hyperref}
\usepackage{amsmath,amssymb,theorem,url,dsfont} 
\usepackage{graphics,graphicx,epsfig}

\usepackage{geometry}
\geometry{a4paper,left=30mm,right=25mm, top=35mm, bottom=35mm}

\usepackage{color}
\DeclareSymbolFontAlphabet{\Bbb}{AMSb}

\bibliographystyle{abbrv}

\definecolor{lila}{rgb}{0.9,0,1}

\newcommand{\cR}[2]{{{\cal R}_{#1}{(#2)}}}

\newcommand{\cX}{\mathcal{X}}
\newcommand{\cY}{\mathcal{Y}}

\newcommand{\cXY}{\cX\times\cY}

\newcommand{\sx}[1]{}

\newcommand{\BlackBox}{\rule{1.5ex}{1.5ex}}

{\end{list}}

\newenvironment{proof}{\begin{list}{{\bf \em Proof: }}%
{\setlength{\labelsep}{0pt}\setlength{\leftmargin}{0pt}\setlength{\labelwidth}{0pt}}\item}%
{\hfill\BlackBox\end{list}}

{\end{list}}

{\hfill\BlackBox\end{list}}
{\hfill\BlackBox\end{list}}
{\hfill\BlackBox\end{list}}

\newcommand{\bc}{\begin{center}}
\newcommand{\ec}{\end{center}}
\newcommand{\bi}{\begin{itemize}}
\newcommand{\ei}{\end{itemize}}
\newcommand{\be}{\begin{equation}}
\newcommand{\ee}{\end{equation}}
\newcommand{\beqna}{\begin{eqnarray*}}
\newcommand{\eeqna}{\end{eqnarray*}}
\newcommand{\bd}{\begin{displaymath}}
\newcommand{\ed}{\end{displaymath}}
\newcommand{\bt}{\begin{tabular}}
\newcommand{\et}{\end{tabular}}

\newtheorem{Definition}{Definition}[section]
\newtheorem{Theorem}[Definition]{Theorem}

\newtheorem{Example}[Definition]{Example}

\newtheorem{Lemma}[Definition]{Lemma}

\newtheorem{Assumption}[Definition]{Assumption}

%
%


\newcommand{\rem}[1]{}
\newlength{\fixboxwidth}
\setlength{\fixboxwidth}{\marginparwidth}
\addtolength{\fixboxwidth}{+55pt}






\newcommand{\N}{\mathds{N}}    

\newcommand{\R}{\mathds{R}}    


\newcommand{\snorm}[1] {\Vert #1 \Vert}

\newcommand{\inorm}[1] {\Vert #1 \Vert_\infty}




\newcommand{\lt}{<}

\def \lb        { \lambda }

\def \s         { \sigma }










\newcommand{\LnullX} {{\mathcal{L}_0(\cX)}}  


\def \P       {\mathrm{P}}   
\def \Q         { \mathrm{Q} } 

\def \D         { \mathrm{D} }
\newcommand{\PM} {\mathcal{M}_1}   


\newcommand{\Ex}{\mathbb{E}}     







\newcommand{\lpin}{{{L}_{\tau\mathrm{-pin}}}}



\newcommand{\Lpin}[1]{{L_{#1-\mathrm{pin}}}}

\newcommand{\cL}{\mathcal{L}}

\newcommand{\LZrisk}{{\cal R}_{L_0,{\P}}}
\newcommand{\LErisk}{{\cal R}_{L_\epsilon,{\P}}}

\newcommand{\fPn}{f_{\P;n}}
\newcommand{\fPOn}{f_{\P_1;n}}
\newcommand{\fDnn}{f_{\D;n}}
\newcommand{\gPPn}{g_{\P,\P;n}}
\newcommand{\gPOPTn}{g_{\P_1,\P_2;n}}
\newcommand{\gPPOn}{g_{\P,\P_1;n}}
\newcommand{\gPDn}{g_{\P,\D;n}}

\newcommand{\gDDn}{g_{\D,\D;n}}

\title{\textbf{Estimation of scale functions to model heteroscedasticity by support vector machines}}

\author{\textbf{Robert Hable}\\
University of Bayreuth\\
Department of Mathematics\\
D-95440 Bayreuth\\
\texttt{robert.hable@uni-bayreuth.de}
\\ \hspace*{1em} \\
\textbf{Andreas Christmann}\\
University of Bayreuth\\
Department of Mathematics\\
D-95440 Bayreuth\\
\texttt{andreas.christmann@uni-bayreuth.de}
}



\begin{document}

\maketitle

\begin{abstract}

A main goal of regression is to derive statistical conclusions on the conditional distribution of the output variable $Y$ given the input values $x$. Two of the most important characteristics of a single distribution are location and scale. Support vector machines (SVMs) are well established to estimate location functions like the conditional median or the conditional mean. We investigate the estimation
of scale functions by SVMs when the conditional median is unknown, too. Estimation of scale functions is important e.g. to estimate the volatility in finance.
We consider the median absolute deviation (MAD) and the interquantile range (IQR)
as measures of scale. 
Our main result shows the consistency of MAD-type SVMs.
\end{abstract}

\section{Introduction}\label{sec:intro}
  
Let $\P$ be the distribution of a pair of random variables
$(X,Y)$ with values in a set
$\cX\times\cY$ where $X$ is an input variable and $Y$ is
a real-valued output variable. 
The goal in regression problems is to derive statistical conclusions on the conditional distribution of $Y$ given $X=x$. 
Generally, location and scale are considered as the two most important characteristics of a distribution and estimating these quantities is one of the main topics in statistics. 

Regularized empirical risk minimization
\cite{vapnik1998,wahba1999,ScSm2002,Hofmann:Schoelkopf:2008}
using the kernel trick proposed by \cite{ScSmMu98a}
and the special case of \textit{support vector machines} (SVMs)
\cite{vapnik1998,CristianiniShaweTaylor2000,ScSm2002,SteinwartChristmann2008a} 
are well established methods in order to estimate the location
of the conditional distribution of $Y$ given $X=x$. For an
i.i.d.\ sample
$D=\big((X_1,Y_1),\dots,(X_n,Y_n)\big)$ 
drawn from $\P$, the SVM-estimator is defined by
\begin{eqnarray}\label{intro-def-svm}
  f_{L,\D,\lambda}\;=\;
  \text{arg}\inf_{f\in H}
    \frac{1}{n}\sum_{i=1}^n L\big(Y_i,f(X_i)\big)
    +\lambda\|f\|_H^2 \, ,
\end{eqnarray}
where $L$ is a loss function, $H$ is a certain space
-- a so-called \textit{reproducing kernel Hilbert space}
(RKHS) -- 
of functions $f:\cX\rightarrow\R$, and $\lambda\in(0,\infty)$ is
a regularization parameter in order to prevent overfitting.
We refer to  
\cite{wahba1999,ScSm2002,BerlinetThomasAgnan2004,Hofmann:Schoelkopf:2008,SteinwartChristmann2008a}
for the concept of an RKHS.
There are a number of different quantities which describe
the location of a single distribution and which can be estimated by
SVMs by choosing a suitable loss function. The conditional
mean function $g(x):=\Ex_\P[Y|X=x]$, $x\in\cX$ can be estimated by
using the least-squares loss $L_{LS}(y,t)=(y-t)^2$ and the 
\emph{$\tau$-quantile function} $g(x):=f_{\tau,\P}^\ast(x)$, $x\in\cX$,
(see (\ref{def-quantile-function}) below)
by using the \emph{$\tau$-pinball loss function}
$$
\lpin(y,t)\;=\;
  \left\{\renewcommand{\arraystretch}{0.9}
    \begin{array}{ll}
      (1-\tau)\cdot(t-y) & \,\text{ if }\;y-t < 0, \\
      \tau\cdot(y-t) & \,\text{ if }\;y-t \geq 0, \qquad (y,t)\in\cY\times\R, \\
    \end{array}
\right.
$$
see \cite{KoenkerBassett1978,Koenker2005,ScSmWiBa00aa,TakeuchiLeSearsSmola2006}.
The choice $\tau=0.5$ leads to an estimate of 
the \emph{median function}
$$
f_{0.5,\P}^\ast(x):=\text{median}_\P(Y|X=x), \qquad x\in\cX.
$$

The goal of this paper is to investigate two methods to estimate the variability of the conditional distributions of $Y$ given $X=x$ for $x\in\cX$ via \emph{scale functions}.
Estimation of heteroscedasticity is interesting in many areas of applied statistics, e.g., for the estimation of volatility in finance.  
To fix ideas, let us \emph{illustrate} what we mean by scale function estimation by considering a small data concerning the so-called LIDAR technique.
LIDAR is the abbreviation of LIght Detection And Ranging. 
This technique uses the reflection of laser-emitted light to detect chemical compounds in the atmosphere. We consider the logarithm of the ratio of light received from two laser sources as the output variable $Y=\mathtt{logratio}$, whereas the single input variable $X=\mathtt{range}$ is the distance traveled before the light is reflected back to its source.  We refer to \cite{RuppertWandCarroll2003}  
for more details on this data set.
A scatterplot of the data set consisting of $n=221$ observations is shown in the left subplot of Figure \ref{lidarplot} together with the fitted quantile curves based on SVMs using the pinball loss function for $\tau\in\{0.05, 0.25, 0.5, 0.75, 0.95\}$ and the Gaussian RBF kernel $k(x,x'):=\exp(-\gamma \snorm{x-x'}_2^2)$ for $x,x'\in\cX$.
By looking at the estimated median function (i.e., the black curve in the middle of
the left subplot), we clearly see a nonlinear relationship between both variables which is almost constant for values of $\mathtt{range}$ below $550$ and decreasing for higher values of \texttt{range}. However, there is also a considerable change of the variability of $\mathtt{logratio}$ given $\mathtt{range}$: the variability
is relatively small for small values of $\mathtt{range}$, but much larger for large values of $\mathtt{range}$. This becomes obvious by looking at the other estimated quantile curves in the left subplot or by looking at the right subplot of Figure
\ref{lidarplot} which shows the estimated width of intervals covering at least $50\%$ of the mass of $\P(Y|x)$. In this simple example we can just look onto the 2-dimensional plot to realize this kind of heteroscedasticity of the conditional distribution of $Y$ given $X=x$. However, this is obviously no longer possible if the input space $\cX$ is a high-dimensional Euclidean space or
an abstract metric space. Hence an automatic and non-parametric method to model and to estimate such kind of variability becomes important.   
\begin{figure}[t]\label{lidarplot}
\begin{center}
\caption{Illustration of the estimation for scale functions by SVMS for the LIDAR data set. Left subplot: data set, estimated quantile functions with SVMs for $\tau=0.5$ (black), $\tau=0.25$ and $\tau=0.75$ (both in blue),
$\tau=0.05$ and $\tau=0.95$ (both in red).
Right subplot: Estimated width of the intervals covering $50\%$ of the mass of $\P(Y|x)$. IQR-type SVM (blue) using $(\tau_1,\tau_2)=(0.25,0.75)$ and
$2$ times the MAD-type SVM (green).}
\vspace{-1cm}
 \includegraphics[width=0.80\textwidth]{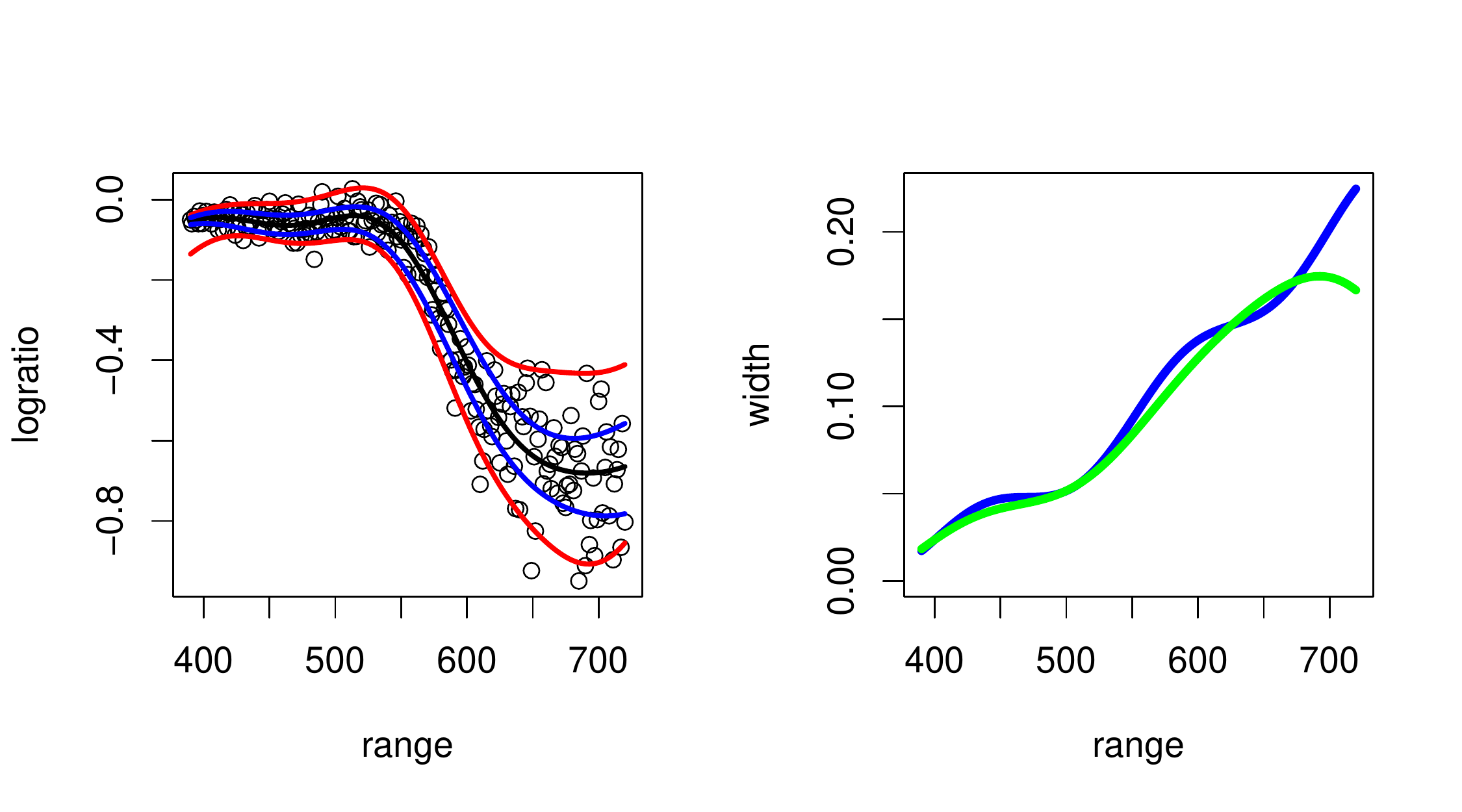}
\end{center} \vspace{-1cm}
\end{figure}
Therefore, this article investigates how two classical scale quantities of the conditional distribution of $Y$ given $X=x$ can be estimated by use of SVMs. 
Such scale functions $g:\cX\to[0,\infty)$ are quite common in a heteroscedastic model like $\P(Y|x)=f(x)+g(x)\varepsilon$, where $f$ denotes the location function and $\varepsilon$ denotes the stochastic error term. Note, that we will \emph{not} assume 
such a specific model.
As in case of location, there are
several well established quantities which describe the scale, e.g., \cite[Chap.\,5]{Huber1981}\\
$~~~(i)~~\,$ \emph{the variance function}:
$g(x)=\text{Var}_\P(Y|X=x)$, $x\in\cX$,\\
$~~~(ii)~$ \emph{the median absolute
deviation from the median (MAD) function}: \\ 
$~~~~~~~~~~~ g(x):=\text{MAD}_\P(Y|X=x)
 :=\text{median}\big(|Y-f_{0.5,\P}^\ast(x)|\,\big|\,X=x\big),$ $x\in\cX$,\\
$~~~(iii)$ \emph{the interquantile range (IQR) function for quantiles $\tau_1 <\tau_2$}:\\
$~~~~~~~~~~~ g(x):=\text{IQR}_{\tau_1,\tau_2}(Y|X=x):=
 f_{\tau_2,\P}^\ast(x)-f_{\tau_1,\P}^\ast(x)$, $x\in\cX$.\newline
Note that these three quantities are \emph{not} directly comparable. However, 
$\mathrm{IQR}_{0.25,0.75}$ and $2$ times MAD
are both quantities for the width of an interval covering at least $50\%$ of the probability mass of
$\P(Y|x)$.
There is a vast literature on the estimation of scale functions, often based on special parametric dispersion models, see, e.g.,
\cite{Jorgensen1987,Smyth1989,Jorgensen1997},
and for a wavelet thresholding approach for univariate 
regression models we refer to \cite{Cai:Wang:2008}. 

In this article, we consider the MAD function and the  IQR function and show how both can be consistently estimated in a purely nonparametric manner with SVMs.  
In case of the MAD,
we estimate the unknown median function $f_{0.5,\P}^\ast$
by an SVM $f_{\Lpin{0.5},\D,\lambda}$
and calculate the estimated absolute residuals
$
R_i:=|Y_i-f_{\Lpin{0.5},\D,\lambda}(X_i)|
$
in a first step. 
In a second step, we estimate the conditional median of the absolute residuals by the SVM
based on a smoothed version of the $\frac{1}{2}$-pinball loss defined in (\ref{def-smoothed-pinball-loss-function}) below
for the pairs of random variables $(X_i,R_i)$. The resulting estimator is called 
MAD-type SVM and it is shown
in Subsection \ref{sec:results2} that it is risk-consistent (up to any predefined
$\varepsilon>0$) 
even though 
(i) the estimation in the second step
cannot be based on the true residuals but has to be based on the
estimated residuals because the true median function is unknown
and (ii) the random variables $(X_i,R_i)$ are not i.i.d. 
In case of the IQR 
$f_{\tau_2,\P}^\ast-f_{\tau_1,\P}^\ast$, we respectively estimate the 
 $\tau_j$-quantile function $f_{\tau_j,\P}^\ast$
by use of the $\tau_j$-pinball loss so that we get 
$f_{L_{\tau_2\text{-pin}},\D,\lambda_2}-
 f_{L_{\tau_1\text{-pin}},\D,\lambda_1}
$
as an estimate, which we call IQR-type SVM. As this is the difference of two
standard SVMs, we can carry over many well-known facts on SVMs 
in this case in Subsection \ref{sec:results1}.
In both cases, available software, e.g., the R-package \texttt{kernlab} \cite{kernlab}
or the C++ implementation \texttt{mySVM}
\cite{Rueping2000}, can be used since we essentially have to calculate SVMs for pinball losses. 

The rest of the paper has the following structure.
Section 2 contains with Theorem \ref{MAD-risk-consistency}
our main result.
Section 3 contains not only the proof of this theorem, but also gives
two new consistency results in the $L_1$-sense for SVMs based on the pinball loss, see Lemma \ref{lemma-L1-convergence-quantile}
and Theorem \ref{rh-L1consistency}. Although we need these results in our proof of
Theorem \ref{MAD-risk-consistency}, we think that they 
are interesting in its own, because
they improve earlier consistency results of SVMs which showed the weaker kind of convergence in probability, see \cite[Cor.\ 3.62, Thm.\ 9.7(i)]{SteinwartChristmann2008a}.

\section{Main results}\label{sec:results}
    
The following assumptions and notations are 
used throughout the whole article.
\begin{Assumption}\label{genass}
Let $\cX$ be a complete separable metric space, e.g.\
$\cX=\R^d$, and
$\cY\subset\R$ be closed. For $j\in\{1,2\}$,
let $k_j:\cX\times\cX\to\R$ be a bounded 
continuous kernel with
$\inorm{k_j}:=\sup_{x\in\cX}(k_j(x,x))^{1/2}<\infty$.
Its corresponding reproducing kernel Hilbert space (RKHS)
is denoted by $H_j$, its corresponding canonical feature map
is denoted by $\Phi_j$, and it is assumed that
each $H_j$ is dense in $L_1(\mu)$ for every $\mu\in\PM(\cXY)$. 
\end{Assumption}
$\PM(\cXY)$ denotes the set of all Borel probability measures on $\cXY$.
The unknown joint distribution of $(X,Y)$ is denoted by $\P\in\PM(\cXY)$
and $D=\big((X_1,Y_1),\dots,(X_n,Y_n)\big)$ is an i.i.d.\ sample
drawn from $\P$.
Let $\P_{\cX}$ denote the distribution of $X$, let
$\LnullX$ denote the set of all Borel measurable functions $f:\cX\to\R$ and
let $L_1(\P_{\cX})$ denote the set of all $\P_{\cX}$-integrable
functions $f:\cX\to\R$.
We define the $\tau$-quantile function as the (perhaps set-valued)
function
\begin{eqnarray}\label{def-quantile-function}
  \cX\,\rightarrow\,2^\R\,,\;\;
  x\,\mapsto\,F_{\tau,\P}^\ast(x)\,:=\,
  \big\{t\in\R\,:\,\,\P\big((-\infty,t]\big|\,x\big)\geq\tau
        \text{ and }\P\big([t,\infty)\big|\,x\big)\geq1-\tau
  \big\}\,.
\end{eqnarray}
We make the standard assumption that $F_{\tau,\P}^\ast(x)$ are 
singletons and hence we write 
$f_{\tau,\P}^\ast(x):\cX\rightarrow\R$ instead,
see \cite{SteinwartChristmann2007b,SteinwartChristmann2011}.

  \subsection{MAD-type estimation}\label{sec:results2}

We would like to estimate the MAD function given by
$g(x)=\text{MAD}_{\P}(Y|X=x)
 =\text{median}_{\P}\big(|Y-f_{0.5,\P}^\ast(x)|\,\big|\,X=x\big)
$
where $f_{0.5,\P}^\ast$ is the 
median function. First, we estimate the median function $f_{0.5,\P}^\ast$.
For a random sample 
$D=\big((X_1,Y_1),\dots,(X_n,Y_n)\big)$ drawn from $\P$, 
the SVM-estimator
for $f_{0.5,\P}^\ast$
is 
$$
  f_{\Lpin{0.5},\D,\lambda_{1,n}}\;=\;
  \text{arg}\inf_{f\in H_1} 
  \bigg(\frac{1}{n}\sum_{i=1}^n 
            \Lpin{0.5}\big(Y_i,f(X_i)\big)
        \,+\,\lambda_{1,n}\|f\|_{H_1}^2
  \bigg),
$$
$\lambda_{1,n}\in(0,\infty)$, and $H_1$ is an RKHS.
Then, we can estimate the conditional median of the absolute residuals
$|Y-f_{0.5,\P}^\ast(x)|$ by use of the \emph{estimated} 
absolute residuals.
Let us define
\begin{eqnarray}\label{def-MAD-SVM-pre}
  \tilde{g}_{\D,n}\;=\;
  \text{arg}\inf_{g\in H_2} 
  \bigg(\frac{1}{n}\sum_{i=1}^n 
                     L_{\varepsilon}
                        \big(\big|Y_i-f_{\Lpin{0.5},\D,\lambda_{1,n}}(X_i)
                             \big|,
                             g(X_i)
                         \big)
        \,+\,\lambda_{2,n}\|g\|_{H_2}^2
  \bigg),
\end{eqnarray}
where, for some small predefined number $\varepsilon>0$, 
the loss function $L_{\varepsilon}$ defined by 
\begin{eqnarray}\label{def-smoothed-pinball-loss-function}
  L_{\varepsilon}(y,t)\;=\;
  \tfrac{1}{2}(y-t)-
  \varepsilon\log\big(2\Lambda\big(\tfrac{y-t}{\varepsilon}\big)
                 \big)
  \;=\;
    \Lpin{0.5}(y,t)-
  \varepsilon
     \log\big(2\Lambda\big(\tfrac{|y-t|}{\varepsilon}\big)
         \big)\;,
\end{eqnarray}
is an $\varepsilon$-smoothed version
of the pinball loss function for $\tau=0.5$,
$\Lambda(r)=1/(1+e^{-r})$ for every $r\in\R$,
$\lambda_{2,n}\in(0,\infty)$,
and $H_2$ is an RKHS. Since $\tilde{g}_{\D,n}$ occasionally can
have negative values, we propose the MAD-type estimator
\begin{eqnarray}\label{def-MAD-SVM-non-negative}
  g_{\D,n}\;=\;\max\{\tilde{g}_{\D,n},0\}
\end{eqnarray}
instead of $\tilde{g}_{\D,n}$.
We use the smoothed version $L_\varepsilon$ of the pinball loss function
$\Lpin{0.5}$
because we will need in the proof of Theorem
\ref{MAD-risk-consistency} that the loss function
has a Lipschitz continuous derivative,
see (\ref{lipschitz-continuous-derivative-of-loss-1}) and 
(\ref{lipschitz-continuous-derivative-of-loss-2}).
This is the price we pay for the
unavoidable fact that
our estimation cannot be based on the true residuals but on the 
estimated ones because the distribution $\P$ of
$(X_i,Y_i)$ is assumed to be unknown in statistical machine learning.
Some easy calculations show that the smoothed pinball loss function $L_\varepsilon$ 
is convex, Lipschitz continuous with $|L_\varepsilon|_1=0.5$, has a Lipschitz continuous derivative, and fulfills 
$0\leq \Lpin{0.5}(y,t)-L_{\varepsilon}(y,t)
 \leq\log(2) \varepsilon<\varepsilon
$
for every $(y,t)\in\cY\times\R$ 
and the risks fulfill, for all $\P\in\PM(\cY\times\R)$,
$$0\;\leq\;
 \Ex_\P \Lpin{0.5}(Y,f(X))- \Ex_\P L_\varepsilon(Y,f(X))
 \;\le\;
 \Ex_\P |\Lpin{0.5}(Y,f(X))-L_{\varepsilon}(Y,f(X))|
 <
 \varepsilon\;.
$$
The $\varepsilon$-smoothed version of the pinball loss is actually
a re-parametrized logistic loss function 
$L_\varepsilon(y,t)
 =\varepsilon L_{\text{logistic}}(y/\varepsilon,t/\varepsilon)/2,
$  
see \cite[p.\ 44]{SteinwartChristmann2008a}
Hence the SVM based on $L_\varepsilon$ can be calculated
by any software which supports the logistic loss. For the
illustration purposes in the introduction, we used $\varepsilon=0.1$
and calculated (\ref{def-MAD-SVM-pre}) by Newton-Raphson.

For any loss function $L$ and
every measurable $f,g:\cX\rightarrow\R$, define the risk
\begin{eqnarray}\label{def-risk-for-mad}
  \mathcal{R}_{L,\P}(f,g)\;:=\;
  \Ex_{\P}\, L\big(|Y-f(X)|,g(X)\big)\;.
\end{eqnarray}
If the median function 
$f_{0.5,\P}^\ast$ and the 
MAD function 
$g_{\P}^\ast(x)=\text{MAD}(Y|X=x)$ 
uniquely exist, then 
the 
MAD function $g_{\P}^\ast$ minimizes
$g\mapsto\mathcal{R}_{\Lpin{0.5},\P}\big(f_{0.5,\P}^\ast,g\big)
$
over all measurable functions $g:\cX\rightarrow\R$, i.e.\
$$\mathcal{R}_{\Lpin{0.5},\P}\big(f_{0.5,\P}^\ast,g_{\P}^\ast\big)\;=\;
  \inf_{g\in\cL_0(\cX)}
    \mathcal{R}_{\Lpin{0.5},\P}\big(f_{0.5,\P}^\ast,g\big)
  \;.
$$

The following theorem says that $g_{\D,n}$ is
risk $\varepsilon$-consistent for the MAD function.
\begin{Theorem}\label{MAD-risk-consistency}
  In addition to Assumption 
  \ref{genass}, assume that $\Ex_{\P} |Y|<\infty$
  and that the median function
  $f_{0.5,\P}^\ast:\cX\rightarrow\R$ is almost surely unique.
  Let $L_0$ denote the $0.5$-pinball loss function and
  let $\varepsilon>0$ be the predefined real number in the
  loss function $L_\varepsilon$.
  Then, for $n\rightarrow\infty$,
  $$\inf_{g\in\cL_0(\cX)}\!\!
    \LZrisk\big(f_{0.5,\P}^\ast,g\big)
    +\varepsilon\,\geq\,
    \LZrisk\big(f_{0.5,\P}^\ast,g_{\D,n}\big)+o_{\P}(1)
    =\LZrisk\big(f_{L_0,\D,\lambda_{1,n}},g_{\D,n}\big)+o_{\P}(1)
  $$
  if $\,\lim_{n\rightarrow\infty}\lambda_{1,n}=0$,
  $\,\lim_{n\rightarrow\infty}\lambda_{2,n}=0$, \,and
  $\,\lim_{n\rightarrow\infty}\lambda_{1,n}^2\lambda_{2,n}^2n=\infty$.
\end{Theorem}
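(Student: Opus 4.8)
The plan is to collapse the whole display to a single $L_\varepsilon$-risk consistency claim for the second-stage SVM and to prove that claim by comparing the estimated-residual problem with the idealized problem built on the \emph{true} residuals. Write $f^\ast=f_{0.5,\P}^\ast$, $\hat f=f_{L_0,\D,\lambda_{1,n}}$, and recall $\LZrisk(f,g)=\Ex_{\P}\tfrac12\big||Y-f(X)|-g(X)\big|$. \emph{Step 1 (the equality and two harmless reductions).} Since $L_0=\Lpin{0.5}$ is $\tfrac12$-Lipschitz in its first argument, $|\LZrisk(f^\ast,g)-\LZrisk(\hat f,g)|\le\tfrac12\snorm{f^\ast-\hat f}_{L_1(\P_{\cX})}$ for every $g$, and the $L_1$-consistency of the pinball SVM (Theorem \ref{rh-L1consistency}) gives $\snorm{f^\ast-\hat f}_{L_1(\P_{\cX})}=o_{\P}(1)$; applied at $g=g_{\D,n}$ this is exactly the asserted equality. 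For the inequality I use the two recorded loss facts: $0\le\Ex_{\P}L_0-\Ex_{\P}L_\varepsilon<\varepsilon$ (so it suffices to bound $L_\varepsilon$-risks and pay $\varepsilon$ once) and $\mathcal R^\ast_{L_\varepsilon,\P_{f^\ast}}\le\inf_{g}\LZrisk(f^\ast,g)$ (because $L_\varepsilon\le L_0$ pointwise), where $\mathcal R_{L_\varepsilon,\P_{f^\ast}}(g):=\Ex_{\P}L_\varepsilon(|Y-f^\ast(X)|,g(X))$ and $\mathcal R^\ast_{L_\varepsilon,\P_{f^\ast}}$ is its infimum over $\cL_0(\cX)$. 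Finally, because the target conditional median is non-negative and $L_\varepsilon(y,\cdot)$ is convex and minimized at $t=y\ge0$, truncation does not increase the risk, so $\mathcal R_{L_\varepsilon,\P_{f^\ast}}(g_{\D,n})\le\mathcal R_{L_\varepsilon,\P_{f^\ast}}(\tilde g_{\D,n})$. Chaining these, the theorem follows once I establish the central claim
$$\mathcal R_{L_\varepsilon,\P_{f^\ast}}(\tilde g_{\D,n})\;\le\;\mathcal R^\ast_{L_\varepsilon,\P_{f^\ast}}+o_{\P}(1).$$

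\emph{Step 2 (decoupling the two error sources).} Let $\P_{\hat f}$ denote the law of $\big(X,|Y-\hat f(X)|\big)$ for the fixed (data-dependent) $\hat f$, and split
$$\mathcal R_{L_\varepsilon,\P_{f^\ast}}(\tilde g_{\D,n})-\mathcal R^\ast_{L_\varepsilon,\P_{f^\ast}}=\underbrace{\big[\mathcal R_{L_\varepsilon,\P_{f^\ast}}(\tilde g_{\D,n})-\mathcal R_{L_\varepsilon,\P_{\hat f}}(\tilde g_{\D,n})\big]}_{(\mathrm I)}+\underbrace{\big[\mathcal R_{L_\varepsilon,\P_{\hat f}}(\tilde g_{\D,n})-\mathcal R^\ast_{L_\varepsilon,\P_{f^\ast}}\big]}_{(\mathrm{II})}.$$
Term $(\mathrm I)$ changes only the location function inside the risk at a \emph{fixed} argument $g=\tilde g_{\D,n}$, so, as $L_\varepsilon$ is again $\tfrac12$-Lipschitz in its first argument, it is bounded by $\tfrac12\snorm{f^\ast-\hat f}_{L_1(\P_{\cX})}=o_{\P}(1)$; the point is that this "bias'' channel is \emph{not} amplified by the regularization. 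The same bound gives $|\mathcal R^\ast_{L_\varepsilon,\P_{\hat f}}-\mathcal R^\ast_{L_\varepsilon,\P_{f^\ast}}|=o_{\P}(1)$, so $(\mathrm{II})$ reduces to the excess risk $\mathcal R_{L_\varepsilon,\P_{\hat f}}(\tilde g_{\D,n})-\mathcal R^\ast_{L_\varepsilon,\P_{\hat f}}$ of the second-stage SVM with respect to its \emph{own} plug-in population distribution.

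\emph{Step 3 (the sample/coupling channel).} This residual excess risk is the crux: $\tilde g_{\D,n}$ is the SVM for the data $(X_i,|Y_i-\hat f(X_i)|)$, but these residuals are \emph{not} i.i.d.\ from $\P_{\hat f}$, since $\hat f$ is computed from the very same sample. The plan is to control it through the optimality condition of the second-stage SVM, exploiting that $L_\varepsilon$ was chosen precisely to have a Lipschitz-continuous derivative: this makes the solution map depend on the input measure with a modulus of order $\lambda_{2,n}^{-1}$ (an influence-function/stability estimate). Feeding in the first-stage fluctuation via the a priori bound $\inorm{\hat f}=O_{\P}(\lambda_{1,n}^{-1/2})$ (from evaluating the first regularized risk at $0$ together with $\Ex_{\P}|Y|<\infty$), a companion bound for $\inorm{\tilde g_{\D,n}}$ that inherits this magnitude through the residuals, the $\tfrac12$-Lipschitz dependence of the residuals on $\hat f$, and a concentration estimate of order $n^{-1/2}$, one obtains a stochastic remainder controlled by a quantity of order $\big(\lambda_{1,n}\lambda_{2,n}\sqrt n\big)^{-1}$, which tends to $0$ exactly under $\lambda_{1,n}^2\lambda_{2,n}^2n\to\infty$. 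Together with the vanishing second-stage approximation error (here $\lambda_{2,n}\to0$ and the density of $H_2$ in $L_1(\mu)$ from Assumption \ref{genass}, applied uniformly over the nearby measures $\P_{\hat f}$), this yields $(\mathrm{II})=o_{\P}(1)$ and closes the argument.

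The main obstacle is Step 3. The two-stage construction feeds a data-dependent, non-i.i.d.\ input into the second SVM, and the two regularization parameters act as \emph{two} separate amplification factors — one on the first-stage fluctuation and one on the second-stage solution map — which is what produces the product rate $\lambda_{1,n}^2\lambda_{2,n}^2n\to\infty$ rather than the weaker individual conditions one might naively hope for. Making the stability estimate quantitative requires differentiability of the loss in an influence-function sense, which is unavailable for the raw pinball loss and is the sole reason for smoothing $\Lpin{0.5}$ into $L_\varepsilon$; the remaining bookkeeping is the joint capacity control of the two RKHS balls of radii $\lambda_{1,n}^{-1/2}$ and $\lambda_{2,n}^{-1/2}$, together with the continuity of the $L_\varepsilon$-Bayes risk in the conditioning measure that underlies the $o_{\P}(1)$ bounds of Step 2.
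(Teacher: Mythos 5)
Your Steps 1 and 2 are correct and coincide with the paper's own reductions: the asserted equality and the swap of $f_{0.5,\P}^\ast$ for the estimated location function both come from the $\tfrac12$-Lipschitz dependence of the risk on its first argument together with the $L_1$-consistency of the first-stage SVM, the $\varepsilon$ is paid exactly once via $0\le L_0-L_\varepsilon\le\varepsilon$, and the truncation step is harmless. Your decomposition into a ``bias channel'' (I) and an ``own plug-in excess risk'' (II) is a mild re-parametrization of the paper's $\Delta_1^{(n)},\dots,\Delta_4^{(n)}$; the paper pivots around the population-regularized $\fPn$ rather than the empirical $\hat f$, which has the advantage that the resulting approximation term $\Delta_4^{(n)}$ is deterministic and can be dispatched by upper semicontinuity of $(f,\lambda)\mapsto\inf_{g\in H_2}A_g(f,\lambda)$, whereas your version needs the same continuity ``uniformly over the nearby measures $\P_{\hat f}$'', which you assert but do not establish.

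The genuine gap is Step 3, which you yourself flag as the main obstacle and then only describe; moreover two of the mechanisms you name are not the ones that deliver the rate. First, the a priori bound $\inorm{\hat f}=O_{\P}(\lambda_{1,n}^{-1/2})$ plays no role in producing the $\lambda_{1,n}^{-1}$ amplification factor: what is actually needed is the representer-theorem stability bound $\snorm{\fDnn-\fPn}_{H_1}\le\lambda_{1,n}^{-1}\snorm{\frac1n\sum_{i=1}^n h_{\P,n}(X_i,Y_i)\Phi_1(X_i)-\Ex_\P h_{\P,n}\Phi_1}_{H_1}=O_{\P}(\lambda_{1,n}^{-1}n^{-1/2})$, the last step by Hoeffding's inequality in the Hilbert space $H_1$. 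Second, no capacity control of RKHS balls is needed (or used): the non-i.i.d.\ difficulty is resolved by a second application of the representer theorem to the second-stage problem, which yields $\snorm{\gDDn-\gPDn}_{H_2}\le\snorm{k_2}_\infty\varepsilon^{-1}\lambda_{2,n}^{-1}\inorm{\fDnn-\fPn}$, the factor $\varepsilon^{-1}$ coming from the Lipschitz constant of $L_\varepsilon'$ in its \emph{first} argument --- this is the precise point where the smoothing is indispensable, as you correctly anticipated. Chaining the two stability bounds gives $O_{\P}(\varepsilon^{-1}\lambda_{1,n}^{-1}\lambda_{2,n}^{-1}n^{-1/2})$, which vanishes exactly under $\lambda_{1,n}^2\lambda_{2,n}^2n\to\infty$. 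So your plan names the right ingredients (a solution-map modulus of order $\lambda_{2,n}^{-1}$, the Lipschitz derivative, $n^{-1/2}$ concentration) and the right final rate, but the step that converts ``the residuals are not i.i.d.'' into a quantitative bound --- two successive representer-theorem comparisons, one in $H_1$ and one in $H_2$ --- is missing, and as written the argument does not close.
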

\textbf{Remarks:} 
(i) We assume that the true
median function uniquely exists but we do \emph{not} assume 
that the true
MAD function $g_{\P}^\ast$ uniquely exists. 
(ii) The value
$\LZrisk\big(f_{0.5,\P}^\ast,g_{\D,n}\big)$ quantifies the 
expected
distance of the estimate $g_{\D,n}$ to the absolute values of the
\emph{true} residuals $|Y-f_{0.5,\P}^\ast(X)|$;
the value
$\LZrisk\big(f_{L_0,\D,\lambda_{1,n}},g_{\D,n}\big)$ quantifies the 
expected
distance of the estimate $g_{\D,n}$ to the absolute values of the
\emph{estimated} residuals $|Y-f_{L_0,\D,\lambda_{1,n}}(X)|$. 
According to Theorem \ref{MAD-risk-consistency}, both values
asymptotically achieve the infimal risk up to the predefined 
$\varepsilon>0$.
(iii) The assumption
$\lim_{n\rightarrow\infty}\lambda_{1,n}^2\lambda_{2,n}^2n=\infty$ 
is stronger than the standard assumption
$\lim_{n\rightarrow\infty}\lambda_{j,n}^2=\infty$; see
\cite[Thm.\ 9.6]{SteinwartChristmann2008a}. This is plausible because
estimating the MAD is burdened with the estimation
of a nuisance function
(i.e. the unknown median function).

  \subsection{IQR-type estimation}\label{sec:results1}
    
Let us now consider a linear combination of $m$ SVMs under the Assumption \ref{genass}.
As the results follow by straightforward calculations using standard results on SVMs, the proofs are left out.

Let $m$ be a positive integer, $J=\{1,\ldots,m\}$, 
$c=(c_1,\ldots,c_m)\in\R^m\!\setminus\!\{0\}$, and
$(\xi_{j,n})_{n\in\N_0}$ be a sequence of measurable functions  into some complete separable metric space $E_j$ enclipped with its Borel $\s$-algebra, $j\in J$.
Obviously, $\sum_{j\in J} c_j \xi_{j,n}$ exists and is unique if
all $\xi_{j,n}$ exist and are unique. 
Furthermore, $\sum_{j\in J} c_j \xi_{j,n}$ converges to 
$\sum_{j\in J} c_j \xi_{j,0}$ in probability (or almost surely or in the
$L_p$ sense) if all $\xi_{j,n}$ converge in probability (or almost surely or in the 
$L_p$ sense) to $\xi_{j,0}$ for $n\to \infty$. 

Now, let $0 \lt \tau_1 \lt \ldots \lt \tau_m \lt 1$. If we either specialize that $\xi_{j,n}$ denotes the support vector machine $f_{\Lpin{\tau_j}, \D, \lb_{j,n}}$ and choose as $E_j$ the RKHS $H_j$ or that $\xi_{j,n}$ denotes the corresponding
risk $\Ex_\P \Lpin{\tau_j}(Y,f_{\Lpin{\tau_j}, \D, \lb_{j,n}}(X))$ and choose $E_j=\cY$, then 
existence, uniqueness and consistency results for the linear combination of the
SVMs or of their risks follow immediately from results 
valid for each individual SVM, see, e.g., \cite{SteinwartChristmann2008a,SteinwartChristmann2007b,SteinwartChristmann2011}
and our Theorem \ref{rh-L1consistency}.
Denote the sub\-differential (see e.g. \cite[Section 5.3]{denkowski2003}) of the pinball loss function $\Lpin{\tau_j}$ 
(with respect to the second argument) by
$\partial \Lpin{\tau_j}$.
We then obtain immediately a representer theorem for the linear combination of SVMs because it is well-known that each individual SVM has a representer theorem, i.e.
it holds 
\be
\sum_{j\in J} c_j f_{\Lpin{\tau_j}, \P, \lb_{j,n}} 
= 
\sum_{j\in J}
-\frac{1}{2\lb_j} c_j \Ex_{\P} h_{j,\P}(X,Y)\Phi_j(X),
\ee
where the functions $h_{j,\P}$ fulfill 
\be
h_{j,\P}(x,y) \in 
\partial \Lpin{\tau_j}(y,f_{\Lpin{\tau_j}, \P, \lb_{j,n}} (x))
  \qquad \forall\,(x,y)\in\cXY\, ,
\ee
see e.g. 
\cite[Thm.\,5.8, Cor.\,5.11]{SteinwartChristmann2008a}.
In the same manner we obtain by straightforward calculations
bounds for the maximal bias of SVMs and Bouligand
influence functions for linear combinations of SVMs, see
\cite{ChristmannVanMessem2008}.
Note that SVMs exist even for heavy-tailed distributions which violate the classical assumption $\Ex_\P|Y|\lt \infty$, which can be shown by
using a trick already used by \cite{Huber1967} where instead of the original loss function
a shifted loss function in the sense 
$L^*_{\tau-\mathrm{pin}}(y,t)
:=
L_{\tau-\mathrm{pin}}(y,t)-L_{\tau-\mathrm{pin}}(y,0)$,
$y,t\in\R$, is used, see \cite{ChristmannVanMessemSteinwart2009}.
This only changes the objective function to be minimized, but not the SVM itself.

\begin{Example}\label{IQR-SVM}[Estimation of scale functions.]
Let $m=2$, $c=(-1,+1)$, $\tau_1\in(0,\frac{1}{2})$ and $\tau_2\in(\frac{1}{2},1)$, e.g. $(\tau_1,\tau_2)=(\frac{1}{4},\frac{3}{4})$ or                                                                                                              $(\tau_1,\tau_2)=(0.05, 0.95)$. 
Then we obtain immediately existence, uniqueness and consistency results
for the difference of the two SVMs based on 
pinball loss functions $\Lpin{\tau_2}$ and $\Lpin{\tau_1}$, respectively. 
In other words, if we denote a $\tau_j$-quantile of the conditional distribution
of $Y$ given $X=x$ by $f_{\tau_j,\P}^*$, 
then the following difference of two SVMs
$$
f_{\Lpin{\tau_2}, \D, \lb_{2,n}}
-
f_{\Lpin{\tau_1}, \D, \lb_{1,n}}
$$
yields an estimator for the difference of 
$f_{\tau_2,\P}^* - f_{\tau_1,\P}^*$.
$\hfill \blacktriangleleft$
\end{Example}

\begin{Example}\label{ASYM-SVM}[Estimation of asymmetry functions.]
Let $m=3$, $c=(+1,-2,+1)$, $\tau_1\in(0,\frac{1}{2})$, $\tau_2=\frac{1}{2}$,
and $\tau_3\in(\frac{1}{2},1)$, e.g. 
$(\tau_1,\tau_2,\tau_3)=(\frac{1}{4},\frac{1}{2}, \frac{3}{4})$ or                                                                                                              $(\tau_1,\tau_2,\tau_3)=(0.05, 0.5, 0.95)$. 
Then we obtain immediately existence, uniqueness and consistency results
for 
$$
f_{\Lpin{\tau_3}, \D, \lb_{3,n}}
-
2 f_{\Lpin{\tau_2}, \D, \lb_{2,n}}
+ 
f_{\Lpin{\tau_1}, \D, \lb_{1,n}},
$$
which gives us an estimator for the difference of 
\be \label{assym} 
( f_{\tau_3,\P}^* - f_{\tau_2,\P}^*) - (f_{\tau_2,\P}^* - f_{\tau_1,\P}^*).
\ee
Let us now choose $\tau\in(0,\frac{1}{2})$ and $\tau_1=1-\tau_3=\tau$, e.g. $\tau=0.05$.
Then the function in {(\ref{assym})} is zero, if, for all $x\in\cX$,
the upper conditional quantile $f_{1-\tau,\P}^*(x)$ differs from the conditional median $f_{0.5,\P}^*(x)$ by the same amount than the conditional median
$f_{0.5,\P}^*(x)$ differs from the lower conditional quantile $f_{\tau,\P}^*(x)$.
Hence the function in {(\ref{assym})} or its supremum norm
can be used as a quantity to measure the amount of asymmetry of the conditional
distribution of $Y$ given $X=x$.
$\hfill \blacktriangleleft$
\end{Example}

It is well-known that the so-called \emph{crossing problem} can occur in quantile regression and that this problem is \emph{not} specific to SVMs, see 
\cite[p.\,55-59]{Koenker2005}. The crossing problem occurs if, for two quantile levels $\tau_1 < \tau_2$, the \emph{estimated} quantile functions $\hat{q}_{\tau_1}, \hat{q}_{\tau_2}$ are in the wrong order for at least one $x\in\cX$, i.e. 
$\hat{q}_{\tau_1}(x) > \hat{q}_{\tau_2}(x)$.
The danger that the crossing problem occurs for a fixed data set is typically small
if $\tau_1$ is close to $0$ and if $\tau_2$ is close to $1$. A numerical method to prevent the crossing problem in kernel based quantile regression
was proposed by \cite{TakeuchiLeSearsSmola2006}.   
  \subsection{Comparison of MAD-type and IQR-type estimation}\label{sec:results3}
    From our point of view, it will often depend on the
application whether an MAD- or an IQR-type SVM is more 
appropriate.

We see three advantages of MAD-type estimation.
(i) One can estimate the heteroscedasticity of $\P(\cdot|x)$ by estimating the conditional median of the absolute residuals $|Y-\hat{f}(x)|$ without estimating \emph{two} conditional quantile functions.
Because in most applications the conditional median (or the conditional mean)
are estimated anyway, one only needs to compute \emph{one} additional SVM instead of
two additional SVMs by the IQR-type approach.
(ii) It can happen that the upper and the lower quantile functions are hard to approximate, e.g., they are not in the RKHSs $H_1$ and $H_2$ which can easily happen
even with the classical Gaussian RBF kernel whose RKHS contains only continuous functions, see 
\cite[Lem.\,4.28, Cor.\,4.36]{SteinwartChristmann2008a} 
whereas the true quantile functions may have jumps.
(iii) It can happen that the \emph{difference} of two quantile functions is easy to estimate, e.g. it is constant, linear, or a polynomial of low order, although the quantile functions $f_{\tau_1,\P}^*$ and $f_{\tau_2,\P}^*$ are complicated.

On the other hand, we see three advantages of IQR-type estimation: 
(i) Greater flexibility by the choice of $(\tau_1,\tau_2)$ whereas the MAD-type approach is based on estimating \emph{one} conditional quantile (which is here  $\tau=\frac{1}{2}$) of the distribution of the \emph{absolute residuals}.
(ii) Greater flexibility by choosing different types of kernels or kernels with different kernel parameters for estimating the upper and the lower quantile functions.
(iii) The IQR-type approach allows the direct estimation of asymmetry or other quantities of interest for the distribution of $Y$ given $X=x$. 

\section{Proofs} \label{sec:appendix}

\subsection{$L_1$ consistency of quantile function estimation by SVMs}

The following lemma strengthens 
\cite[Cor.\ 3.62]{SteinwartChristmann2008a}
in case of the pinball loss function as
convergence in probability is replaced by the stronger
$L_1$-convergence.
\begin{Lemma}\label{lemma-L1-convergence-quantile}
  Let $L$ be the pinball loss with 
  $\tau\in(0,1)$ and let $\P\in\PM(\cX\times\cY)$ be the distribution of
  $(X,Y)$. Assume that $\Ex_{\P} |Y|<\infty$
  and that the conditional quantile function
  $f_{\tau,\P}^\ast:\cX\rightarrow\R$ is $\P_{\cX}$-a.s.\ unique. Then,
  for every $f_n\in L_1(\P_{\cX})$, $n\in\N$, we have 
  $$\lim_{n\rightarrow\infty}\cR{L,\P}{f_n}
    \,=\,\inf_{f\in\mathcal{L}_0(\cX)}\cR{L,\P}{f}
    \qquad\Rightarrow\qquad
    \lim_{n\rightarrow\infty}
    \big\|f_n-f_{\tau,\P}^\ast\big\|_{L_1(\P_{\cX})}\,=\,0\;.
  $$  
\end{Lemma}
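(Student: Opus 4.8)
The plan is to pass from the global excess risk to the pointwise (\emph{inner}) excess risk and then to upgrade the resulting convergence to $L_1$ through a uniform integrability argument. Writing $F_x(\cdot):=\P\big((-\infty,\cdot]\,\big|\,x\big)$ for the conditional c.d.f.\ and $\mathcal{C}_x(t):=\mathcal{R}_{L,\P(\cdot|x)}(t)-\mathcal{R}_{L,\P(\cdot|x)}\big(f_{\tau,\P}^\ast(x)\big)\ge 0$ for the inner excess risk, the first step is the standard representation
$$\cR{L,\P}{f_n}-\inf_{f\in\mathcal{L}_0(\cX)}\cR{L,\P}{f}=\int_{\cX}\mathcal{C}_x\big(f_n(x)\big)\,d\P_{\cX}(x),$$
which holds because the pinball risk is minimized pointwise in $x$ (interchange of infimum and integral). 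Since $\Ex_\P|Y|<\infty$ forces $\Ex_\P[|Y|\,|\,x]<\infty$ for $\P_{\cX}$-a.e.\ $x$, for such $x$ the map $t\mapsto\mathcal{C}_x(t)$ is finite, convex, continuous and coercive, satisfies $\mathcal{C}_x(t)=\int_{f_{\tau,\P}^\ast(x)}^{t}\big(F_x(s)-\tau\big)\,ds$, and — by the a.s.\ uniqueness assumption — has the unique minimizer $f_{\tau,\P}^\ast(x)$, where it vanishes. The hypothesis then reads $\int_{\cX}\mathcal{C}_x(f_n(x))\,d\P_{\cX}\to 0$, i.e.\ $\mathcal{C}_\cdot(f_n(\cdot))\to 0$ in $L_1(\P_{\cX})$.

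The heart of the argument is a pointwise self-calibration inequality with an \emph{integrable} slack term. Let $p_x$ and $q_x$ denote the $\tfrac{\tau}{2}$- and $\tfrac{1+\tau}{2}$-quantiles of $\P(\cdot|x)$, so that $p_x\le f_{\tau,\P}^\ast(x)\le q_x$. For $s>q_x$ one has $F_x(s)-\tau\ge\tfrac{1-\tau}{2}$, and integrating the formula for $\mathcal{C}_x$ from $q_x$ upward yields the linear lower bound $\mathcal{C}_x(t)\ge\tfrac{1-\tau}{2}(t-q_x)$ for $t>q_x$; a symmetric bound holds below $p_x$. Combining these with the trivial estimate on $[p_x,q_x]$ gives, with $c:=\tfrac12\min\{\tau,1-\tau\}$ and $w(x):=q_x-p_x\ge 0$,
$$\big|t-f_{\tau,\P}^\ast(x)\big|\ \le\ w(x)+\tfrac1c\,\mathcal{C}_x(t)\qquad\text{for all }t\in\R\text{ and }\P_{\cX}\text{-a.e.\ }x.$$
Crucially $w\in L_1(\P_{\cX})$: any $\alpha$-quantile $q$ of $\P(\cdot|x)$ obeys the Markov-type bound $|q|\le\Ex_\P[|Y|\,|\,x]/\min\{\alpha,1-\alpha\}$, so both $p_x$ and $q_x$ are $\P_{\cX}$-integrable by $\Ex_\P|Y|<\infty$. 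This is where the moment assumption enters decisively, and it is what makes risk consistency strong enough to control the $L_1$ distance rather than merely convergence in probability.

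Evaluating the inequality at $t=f_n(x)$ gives the pointwise domination $\big|f_n-f_{\tau,\P}^\ast\big|\le w+\tfrac1c\,\mathcal{C}_\cdot(f_n(\cdot))$. The family $\{\mathcal{C}_\cdot(f_n(\cdot))\}_n$ converges to $0$ in $L_1(\P_{\cX})$ and is therefore uniformly integrable; adding the fixed function $w\in L_1(\P_{\cX})$ preserves uniform integrability, and the domination transfers it to $\{|f_n-f_{\tau,\P}^\ast|\}_n$. To conclude, take any subsequence: from $\mathcal{C}_\cdot(f_n(\cdot))\to 0$ in $L_1(\P_{\cX})$ extract a further subsequence along which $\mathcal{C}_x(f_{n_k}(x))\to 0$ for $\P_{\cX}$-a.e.\ $x$, whence coercivity, continuity and uniqueness of the minimizer of $\mathcal{C}_x$ force $f_{n_k}(x)\to f_{\tau,\P}^\ast(x)$ $\P_{\cX}$-a.e.\ Uniform integrability together with this a.e.\ convergence yields, by Vitali's convergence theorem, $\|f_{n_k}-f_{\tau,\P}^\ast\|_{L_1(\P_{\cX})}\to 0$; since every subsequence admits such a further subsequence, $\|f_n-f_{\tau,\P}^\ast\|_{L_1(\P_{\cX})}\to 0$. (Alternatively, convergence in $\P_{\cX}$-probability may be quoted from \cite[Cor.\ 3.62]{SteinwartChristmann2008a} and combined with uniform integrability via Vitali.)

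I expect the self-calibration inequality with integrable slack to be the main obstacle: one must isolate the linear growth of the inner excess risk away from the quantile and verify that the residual ``flat'' width $w$ is $\P_{\cX}$-integrable. The surrounding measure-theoretic steps (the interchange of infimum and integral, the passage from $L_1$-convergence to uniform integrability, Vitali's theorem, and the subsequence principle) are routine.
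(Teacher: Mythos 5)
Your proposal is correct, but it takes a genuinely different route from the paper. The paper's proof is short and leans on citation: it quotes \cite[Cor.\ 3.62]{SteinwartChristmann2008a} to get $f_n\to f_{\tau,\P}^\ast$ in $\P_{\cX}$-probability, observes that the nonnegative functions $h_n(x,y)=L(y,f_n(x))$ converge in probability to $h_0$ with $\int h_n\,d\P\to\int h_0\,d\P$ (which is exactly the risk hypothesis), invokes the classical characterization (Vitali/Bauer, Thm.\ 21.7) to conclude that $(h_n)_n$ is uniformly integrable, and then transfers uniform integrability to $(f_n)_n$ via the elementary lower bound $L(y,t)\geq c\,|y-t|$ with $c=\min\{\tau,1-\tau\}$, so that convergence in probability upgrades to $L_1$-convergence. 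You instead work entirely at the level of inner excess risks: your pointwise self-calibration inequality $|t-f_{\tau,\P}^\ast(x)|\leq w(x)+c^{-1}\mathcal{C}_x(t)$ with the integrable slack $w(x)=q_x-p_x$ delivers the uniform integrability of $|f_n-f_{\tau,\P}^\ast|$ by domination, and the a.e.\ convergence along subsequences is re-derived from coercivity, continuity and uniqueness of the minimizer of $\mathcal{C}_x$ rather than quoted. Your argument is more self-contained and more quantitative (it yields the explicit bound $\|f_n-f_{\tau,\P}^\ast\|_{L_1(\P_{\cX})}\leq\|w\|_{L_1(\P_{\cX})}+c^{-1}\big(\cR{L,\P}{f_n}-\inf_f\cR{L,\P}{f}\big)$, although, as you implicitly recognize, this integrated form alone does not suffice since $\|w\|_{L_1}$ does not vanish, so the Vitali step is still essential); the price is the extra bookkeeping you correctly flag as the main obstacle. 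Two minor points you should make explicit: the measurability of $x\mapsto p_x$ and $x\mapsto q_x$ (needed before integrating $w$; take, e.g., the generalized inverses of the conditional c.d.f., which are jointly measurable), and the finiteness of $\inf_f\cR{L,\P}{f}$ (immediate from $\cR{L,\P}{0}\leq\Ex_{\P}|Y|<\infty$), which is needed for the excess-risk representation to avoid an $\infty-\infty$ expression. Neither is a gap, just a sentence each.
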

\begin{proof}
  Define
  $h_n:\cX\times\cY\rightarrow\R$ by 
  $h_n(x,y)=L\big(y,f_n(x)\big)$ and    
  $h_0:\cX\times\cY\rightarrow\R$ by 
  $h_0(x,y)=L\big(y,f_{\tau,\P}^\ast(x)\big)$.
  Define $c:=\min\{1-\tau,\tau\}$ and note that
  $L(y,t)\geq c|y-t|$ for every $(y,t)\in\cY\times\R$.
  According to \cite[Cor.\ 3.62]{SteinwartChristmann2008a},
  it is already known that $f_n\rightarrow f_{\tau,\P}^\ast$
  in probability (w.r.t.\ $\P_{\cX}$). Therefore, it follows 
  from the continuity of $L$ that
  $h_n\rightarrow h_{0}$
  in probability (w.r.t.\ $\P$). Since
  $$\lim_{n\rightarrow\infty}{\textstyle \int}|h_n|\,d\P\,=\,
    \lim_{n\rightarrow\infty}\cR{L,\P}{f_n}\,=\,
    \inf_{f\in\mathcal{L}_0(\cX)}\cR{L,\P}{f}
    \,=\,\cR{L,\P}{f_{\tau,\P}^\ast}
    \,=\,{\textstyle \int}|h_0|\,d\P\,,
  $$
  the sequence $(h_n)_{n\in\N}$, is uniformly integrable;
  see e.g.\ \cite[Thm.\ 21.7]{bauer2001}.
  Since
  $$\big|f_n(x)\big|\leq\big|y-f_n(x)\big|+|y|\leq
    c^{-1}L\big(y,f_n(x)\big)+|y|=
    c^{-1}h_n(x,y)+|y|
    \quad\forall\,(x,y,n)\in\cX\times\cY\times\N,
  $$
  it follows that
  the sequence $f_n$, $n\in\N$, is uniformly integrable, too.
  Hence convergence in probability of $f_n$, $n\in\N$, implies
  $L_1$-convergence; see e.g.\ \cite[Thm.\ 21.7]{bauer2001}.
\end{proof}
The following theorem strengthens 
\cite[Thm.\ 9.7(i)]{SteinwartChristmann2008a} as
convergence in probability is replaced by the stronger
$L_1$-convergence. The proof coincides with that
of \cite[Thm.\ 9.7(i)]{SteinwartChristmann2008a} apart from
applying Lemma \ref{lemma-L1-convergence-quantile} instead of
\cite[Cor.\ 3.62]{SteinwartChristmann2008a} and therefore is
omitted.
\begin{Theorem}\label{rh-L1consistency}
  Let $\cX$ be a complete measurable space, $\cY\subset\R$
  be closed, $L$ be the pinball loss with $\tau\in(0,1)$, 
  $H$ be a separable RKHS of a bounded kernel $k$ on $\cX$
  such that $H$ is dense in $L_1(\mu)$ for all
  $\mu\in\PM(\cX)$, and $\lambda_n\in(0,\infty)$, $n\in\N$,
  such that $\lim_{n\rightarrow\infty}\lambda_n=0$
  and $\lim_{n\rightarrow\infty}\lambda_n^2n=\infty$.
  Let $\P\in\PM(\cX\times\cY)$ be the distribution of
  $(X,Y)$ and assume that $\Ex_{\P} |Y|<\infty$
  and that the conditional quantile function
  $f_{\tau,\P}^\ast:\cX\rightarrow\R$ is $\P_{\cX}$-a.s.\ unique. 
  Then,
  $$\big\|f_{L,\D,\lambda_n}-f_{\tau,\P}^\ast\big\|_{L_1(\P_{\cX})}
    \;\;\rightarrow\;\;0
    \qquad\text{in probability},
    \qquad n\rightarrow\infty.
  $$
\end{Theorem}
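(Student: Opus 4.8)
The plan is to follow the proof of \cite[Thm.\ 9.7(i)]{SteinwartChristmann2008a} essentially verbatim, replacing only the final appeal to \cite[Cor.\ 3.62]{SteinwartChristmann2008a} (which merely yields convergence in probability of $f_{L,\D,\lambda_n}$ towards $f_{\tau,\P}^\ast$) by the stronger Lemma \ref{lemma-L1-convergence-quantile}. Accordingly I would split the argument into two parts: first establish $L$-risk consistency of the SVM, and then upgrade this to $L_1(\P_{\cX})$-convergence of the estimated functions.

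For the first part I would invoke the standard SVM consistency machinery. Under the present hypotheses -- $k$ bounded, $H$ dense in $L_1(\mu)$ for all $\mu\in\PM(\cX)$, $\lambda_n\to 0$, $\lambda_n^2 n\to\infty$, and $\Ex_{\P}|Y|<\infty$ -- the pinball-loss SVM is $L$-risk consistent, i.e.
$$
\cR{L,\P}{f_{L,\D,\lambda_n}}\;\longrightarrow\;
\cR{L,\P}{f_{\tau,\P}^\ast}
=\inf_{f\in\mathcal{L}_0(\cX)}\cR{L,\P}{f}
\qquad\text{in probability},\quad n\to\infty.
$$
This is exactly the content underlying \cite[Thm.\ 9.6, Thm.\ 9.7(i)]{SteinwartChristmann2008a}; it uses only that the pinball loss is Lipschitz (with $|L|_1=\max\{\tau,1-\tau\}$) and that $k$ is bounded to control the estimation error by an oracle inequality, while the approximation error vanishes because $H$ is dense in $L_1(\P_{\cX})$.

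For the second part, the point requiring care is that Lemma \ref{lemma-L1-convergence-quantile} is formulated for a \emph{deterministic} sequence $f_n\in L_1(\P_{\cX})$, whereas $f_{L,\D,\lambda_n}$ is random. Note first that $f_{L,\D,\lambda_n}\in H$, and since $k$ is bounded every element of $H$ is bounded, so $f_{L,\D,\lambda_n}\in L_1(\P_{\cX})$ and the target norm $Z_n:=\|f_{L,\D,\lambda_n}-f_{\tau,\P}^\ast\|_{L_1(\P_{\cX})}$ is a well-defined random variable. I would then pass from the risk convergence above to the claim via the subsequence principle: a sequence of random variables converges to $0$ in probability if and only if every subsequence admits a further subsequence converging to $0$ almost surely. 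Given any subsequence, risk consistency in probability furnishes a further subsequence $(n_k)$ along which $\cR{L,\P}{f_{L,\D,\lambda_{n_k}}}\to\inf_{f\in\mathcal{L}_0(\cX)}\cR{L,\P}{f}$ holds almost surely in the sample; fixing a sample point in the corresponding probability-one event yields a deterministic sequence of functions in $L_1(\P_{\cX})$ satisfying the hypothesis of Lemma \ref{lemma-L1-convergence-quantile}, which therefore gives $Z_{n_k}\to 0$ along that subsequence. Hence every subsequence of $(Z_n)$ has a further subsequence tending to $0$ almost surely, so $Z_n\to 0$ in probability, as asserted.

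The main obstacle is precisely this transfer from the deterministic Lemma \ref{lemma-L1-convergence-quantile} to the random SVM, i.e.\ ensuring that the uniform-integrability argument inside the lemma may be run pathwise. The subsequence principle handles this cleanly, and the only remaining technical checks -- measurability of $Z_n$ and the membership $f_{L,\D,\lambda_n}\in L_1(\P_{\cX})$ -- both follow from the boundedness of the kernel.
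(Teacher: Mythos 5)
Your proposal is correct and matches the paper's (omitted) proof exactly: the authors likewise state that the argument coincides with that of Theorem 9.7(i) in Steinwart--Christmann, with the appeal to their Corollary 3.62 replaced by Lemma \ref{lemma-L1-convergence-quantile}. Your explicit subsequence argument for transferring the deterministic lemma to the random SVM sequence is a sound way to fill in the detail the paper leaves implicit.
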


\subsection{Proof of Theorem \ref{MAD-risk-consistency}}

In order to increase the readability of the proof, a
comprehensive notation is needed. Therefore, we
define $L_0:=L_{0.5-\text{pin}}$ and, for 
probability measures $\P_1,\P_2\in\PM(\cX\times\cY)$, we define
\begin{eqnarray*}
  \fPOn&:=&f_{L_0,\P_1,\lambda_{1,n}}\;=\;
  \text{arg}\inf_{f\in H_1} 
  \bigg(\int L_0\big(y,f(x)\big)\,\P_1\big(d(x,y)\big)
        \,+\,\lambda_{1,n}\|f\|_{H_1}^2
  \bigg)\;,\\
  \gPOPTn&:=&
  \text{arg}\inf_{g\in H_2} 
  \bigg(\int L_\varepsilon
          \big(|y-\fPOn(x)|,g(x)\big)\,\P_2\big(d(x,y)\big)
        \,+\,\lambda_{2,n}\|g\|_{H_2}^2
  \bigg)\;.
\end{eqnarray*}
In this definition,
$\P_1$ and $\P_2$ can also be equal to the empirical
measure $\D=\frac{1}{n}\sum_{i=1}^n\delta_{(X_i,Y_i)}$, which 
corresponds to the random sample 
$D=\big((X_1,Y_1),\dots,(X_n,Y_n)\big)$.
That is, the estimate $g_{\D,n}$
defined in (\ref{def-MAD-SVM-non-negative}) 
and (\ref{def-MAD-SVM-pre}), 
is given by
$g_{\D,n}=\max\{\gDDn,0\}$ in this notation. 
We obtain 
$$L_\varepsilon^\prime(y,t)
  \;:=\;\tfrac{\partial}{\partial t}L_\varepsilon(y,t)
  \;=\;\tfrac{1}{2}-\Lambda\big(\tfrac{y-t}{\varepsilon}\big)
  \qquad\forall\,y,t\in\R\;.
$$
Since 
$\big|\frac{\partial}{\partial y}L_\varepsilon(y,t)\big|\leq0.5$
and $\big|\frac{\partial}{\partial t}L_\varepsilon(y,t)\big|\leq0.5$
for every $y,t\in\R$, the following Lipschitz property is fulfilled
\begin{eqnarray}\label{smoothed-pinball-lipschitz}
  \big|L(y_1,t_1)-L(y_2,t_2)\big|\;\leq\;
  0.5|y_1-y_2|+0.5|t_1-t_2|
  \qquad\forall\,y_1,y_2,t_1,t_2\,\in\,\R\;.
\end{eqnarray}
An easy calculation shows that
(\ref{smoothed-pinball-lipschitz}) implies
\begin{eqnarray}\label{smoothed-pinball-lipschitz-risks}
  \big|\LErisk(f_1,g_1)-\LErisk(f_2,g_2)\big|
  \;\leq\;0.5\big\|f_1-f_2\|_{L_1(\P_{\cX})}+
          0.5\big\|g_1-g_2\|_{L_1(\P_{\cX})}
\end{eqnarray} 
for all $f_1,f_2,g_1,g_2\in \mathcal{L}_1(\P_{\cX})$.
Note that, by construction,  
$0\leq L_0-L_\varepsilon\leq\varepsilon$,
which implies
\begin{eqnarray}\label{smoothed-pinball-risk}
  \LErisk(f,g)\;\leq\;\LZrisk(f,g)\;\leq\;
  \LErisk(f,g)+\varepsilon
  \qquad\forall\,f,g\in \mathcal{L}_1(\P_{\cX})\,.
\end{eqnarray}
It is obvious from the definition (\ref{def-risk-for-mad})
of the risk $\LZrisk(f,g)$
that replacing negative values of the function $g$ by 0
reduces the risk. Hence,
the definitions imply 
$\LZrisk(f_{0.5,\P}^\ast,g_{\D,n})\leq\LZrisk(f_{0.5,\P}^\ast,\gDDn)$
and it follows from (\ref{smoothed-pinball-risk}) that
\begin{eqnarray}\label{MAD-main-theorem-p4001}
  \lefteqn{
  \LZrisk(f_{0.5,\P}^\ast,g_{\D,n})
           -\!
            \inf_{g\in \mathcal{L}_0(\cX)}\!
              \LZrisk(f_{0.5,\P}^\ast,g)\,\leq\,
  }\nonumber\\
  & &\quad\leq\;
         \LErisk(f_{0.5,\P}^\ast,\gDDn)
           -\!\inf_{g\in \mathcal{L}_0(\cX)}\!
              \LErisk(f_{0.5,\P}^\ast,g)+\varepsilon\;.\qquad
\end{eqnarray}
Applying the triangular inequality yields
\begin{eqnarray}\label{MAD-main-theorem-p4002}
  \LErisk(f_{0.5,\P}^\ast,\gDDn)\!\leq\!
         \Big|\LErisk(f_{0.5,\P}^\ast,\gDDn)\!-\!\LErisk(\fPn,\gPPn)\Big|
         \!+\!\LErisk(\fPn,\gPPn).
\end{eqnarray}
Next, define
\begin{eqnarray*}
  \Delta_1^{(n)}\!\!\!\!\!
  &:=&\!\!\!\!
     \big\|\gDDn-\gPDn\|_{L_1(\P_{\cX})}\,,\;\;\;
    \Delta_2^{(n)}\,:=\,\big\|\gPDn-\gPPn\|_{L_1(\P_{\cX})}\;, \\
  \Delta_3^{(n)}\!\!\!\!\!
  &:=&\!\!\!\!
    \big\|f_{0.5,\P}^\ast-\fPn\|_{L_1(\P_{\cX})}\,,\;\;\;
  \Delta_4^{(n)}\,:=\,
    \Big(\LErisk(\fPn,\gPPn)-
          \inf_{g\in \mathcal{L}_0(\cX)}\LErisk(f_{0.5,\P}^\ast,g)
    \Big).
\end{eqnarray*}
Then, it follows from (\ref{MAD-main-theorem-p4001}),
(\ref{MAD-main-theorem-p4002}),  
(\ref{smoothed-pinball-lipschitz-risks}), and another application of
the triangular inequality that
\begin{eqnarray}\label{MAD-main-theorem-p4003}
  0\leq\LZrisk(f_{0.5,\P}^\ast,g_{\D,n})
           -\!\!
            \inf_{g\in \mathcal{L}_0(\cX)}\!\!
              \LZrisk(f_{0.5,\P}^\ast,g)
         \leq
          0.5\Big(\!\Delta_1^{(n)}\!+\Delta_2^{(n)}\!+\Delta_3^{(n)}\!
             \Big)
          \!+\Delta_4^{(n)}
          \!+\varepsilon \,.
\end{eqnarray}
Each of the four summands $\Delta_1^{(n)},\dots,\Delta_4^{(n)}$ 
will be considered separately in the 
following four parts. In order to
prove the theorem, it is enough to show that $\Delta_1^{(n)}$
and $\Delta_2^{(n)}$ converge to 0 in probability (Part 1 and Part 2), 
that $\Delta_3^{(n)}$ 
converges to 0 (Part 3), and that the limit superior of 
$\Delta_4^{(n)}$ is not larger than 0 (Part 4);
the terms $\Delta_3^{(n)}$ and $\Delta_4^{(n)}$ are non-stochastic.
Note that (\ref{smoothed-pinball-lipschitz-risks})
and Theorem \ref{rh-L1consistency} imply, for $n\rightarrow\infty$,
the convergence in probability of
$$\big|\LZrisk\big(f_{0.5,\P}^\ast,g_{\D,n}\big)-
       \LZrisk\big(f_{L_0,\D,\lambda_{1,n}},g_{\D,n}\big)
  \big|
  \;\leq\;
  0.5\big\|f_{0.5,\P}^\ast-f_{L_0,\D,\lambda_{1,n}}
     \big\|_{L_1(\P_{\cX})}
  \;\rightarrow\;0\;.
$$
\textbf{Part 1:\,}
For $D=\big((X_1,Y_1),\dots,(X_n,Y_n)\big)$, define
$$\Q_D\;=\;\frac{1}{n}\sum_{i=1}^n\delta_{(X_i,|Y_i-\fPn(X_i)|)}
  \qquad\text{and}\qquad
  \tilde{\Q}_D
  \;=\;\frac{1}{n}\sum_{i=1}^n\delta_{(X_i,|Y_i-\fDnn(X_i)|)}\;.
$$
For every $(x,y)\in\cX\times\cY$, define
$h_{\D,n}(x,y)=L^\prime_{\varepsilon}(y,\gPDn(x))$.
Then, it follows from the
representer theorem \cite[Cor.\ 5.10]{SteinwartChristmann2008a} 
that
\begin{eqnarray}\label{MAD-main-theorem-p4004}
  \lefteqn{
    \big\|\gDDn-\gPDn\big\|_{H_2}\,\leq\,
    \lambda_{2,n}^{-1}
       \big\|\Ex_{\tilde{\Q}_D}h_{\D,n}\Phi_2-\Ex_{\Q_D}h_{\D,n}\Phi_2
       \big\|_{H_2}\,\leq
  }\nonumber\\
  &\leq&
       \frac{1}{\lambda_{2,n}\,n}
       \sum_{i=1}^n
          \Big|h_{\D,n}(X_i,|Y_i-\fDnn(X_i)|)-h_{\D,n}
                 (X_i,|Y_i-\fPn(X_i)|)
          \Big|
          \!\cdot\!\big\|\Phi_2(X_i)\big\|_{H_2}\,.\qquad         
\end{eqnarray}
According to the boundedness of $k_1$ and $k_2$,
we will use the well-known inequalities
\begin{eqnarray}\label{MAD-main-theorem-p4005}
  \|\Phi_j(x)\|_{H_j}\,\leq\,\|k_j\|_\infty
  \quad\forall\,x\in\cX
  \qquad\text{and}\qquad
  \|f\|_\infty\,\leq\,\|k_j\|_\infty\|f\|_{H_j}
  \quad\forall\,f\in H_j 
\end{eqnarray}
for every $j\in\{1,2\}$;
see \cite[p.\ 124]{SteinwartChristmann2008a}.
Then, the definition of $h_{\D,n}$ and the easy to prove
Lipschitz property 
$\,\big|L^\prime_{\varepsilon}(y_1,t)-L^\prime_{\varepsilon}(y_2,t)
   \big|
 \,\leq\,\varepsilon^{-1}|y_1-y_2|
$ for all $y_1,y_2,t\in\R$
imply
\begin{eqnarray}
  \lefteqn{
  \big\|\gDDn-\gPDn\big\|_{H_2}
  \stackrel{(\ref{MAD-main-theorem-p4004},\ref{MAD-main-theorem-p4005})%
           }{\leq}
  \frac{\|k_2\|_\infty}{\lambda_{2,n}\,n}\!
       \sum_{i=1}^n\!
          \Big|h_{\D,n}(X_i,|Y_i\!-\!\fDnn(X_i)|)
               -h_{\D,n}(X_i,|Y_i\!-\!\fPn(X_i)|)
          \Big|          
  }\nonumber\\
  &\!\!\leq&\!\!\!\! \label{lipschitz-continuous-derivative-of-loss-1}
       \|k_2\|_\infty\lambda_{2,n}^{-1}
        \sup_{t}\sup_{x,\,y}
          \Big|L^\prime_{\varepsilon}\big(|y\!-\!\fDnn(x)|,t\big)
               -L^\prime_{\varepsilon}\big(|y\!-\!\fPn(x)|,t\big)
          \Big| \\      
  &\!\!\leq&\!\!\!\!  \label{lipschitz-continuous-derivative-of-loss-2}
        \|k_2\|_\infty\varepsilon^{-1}\lambda_{2,n}^{-1}
        \sup_{x,\,y}
          \Big|\big|y\!-\!\fDnn(x)\big|
               -\big|y\!-\!\fPn(x)\big|
          \Big|
      \;\leq\;\|k_2\|_\infty\varepsilon^{-1}\lambda_{2,n}^{-1}
          \big\|\fDnn\!-\!\fPn\big\|_{\infty}\qquad\;\\
  &\!\!\stackrel{(\ref{MAD-main-theorem-p4005})}{\leq}&\!\!\!\! 
         \|k_1\|_\infty\|k_2\|_\infty\varepsilon^{-1}\lambda_{2,n}^{-1}
          \big\|\fDnn-\fPn\big\|_{H_1}\;. \nonumber
\end{eqnarray}
Next, it follows from the representer theorem 
\cite[Cor.\ 5.10]{SteinwartChristmann2008a} that
there is an $h_{\P,n}\in\cL_\infty(\cX)$ such that 
$\|h_{\P,n}\|_\infty\leq0.5$
and
\begin{eqnarray}\label{proof-part-one-1001}
  \big\|\fDnn-\fPn\big\|_{H_1}\;\leq\;
  \lambda_{1,n}^{-1}
           \bigg\|\frac{1}{n}\!\sum_{i=1}^n
                    \big(h_{\P,n}(X_i,Y_i)\Phi_1(X_i)
                         -\Ex_{\P}h_{\P,n}\Phi_1
                    \big)
           \bigg\|_{H_1}\;.
\end{eqnarray}
Define 
$B:=\sup_{x,y}\|h_{\P,n}(x,y)\Phi_1(x)\|_{H_1}\leq 0.5\|k_1\|_\infty$ 
and
fix any $\eta>0$. Then it follows from (\ref{proof-part-one-1001})
and
Hoeffding's inequality 
\cite[Chapter 3]{Yurinsky1995}
that, for $n\rightarrow\infty$,
$$\P^n\Big(\lambda_{2,n}^{-1}\big\|\fDnn\!-\!\fPn\big\|_{H_1}\geq\eta\Big)
  \,\leq\,
  \exp\bigg(-\frac{3}{8}\cdot
                    \frac{\eta^2\lambda_{1,n}^2\lambda_{2,n}^2n}{%
                          \eta\lambda_{1,n}\lambda_{2,n}B+3B^2%
                         }
             \bigg)
         \;\;\longrightarrow\;\;0\;,
$$
because 
$\lim_{n\rightarrow\infty}\lambda_{1,n}^2\lambda_{2,n}^2n=0$.
That is, we have shown that 
$\Delta_1^{(n)}=\big\|\gDDn-\gPDn\big\|_{H_2}$ converges
to 0 in probability w.r.t.\ $\P^n$. 

\textbf{Part 2:\,}
Define $L_{\varepsilon;n}(x,y,t)=L_\varepsilon(|y-\fPn(x)|,t)$.
This yields
$$\gPPOn\;:=
  \text{arg}\inf_{g\in H_2} 
  \left(\int L_{\varepsilon;n}\big(x,y,g(x)\big)\,\P_1\big(d(x,y)\big)
        \,+\,\lambda_{2,n}\|g\|_{H_2}^2
  \right)
  \quad\;\forall\,\P_1\in\PM(\cX\times\cY)\;.
$$
Hence, 
for $h_{\P,n}(x,y)=L^\prime_\varepsilon(|y-\fPn(x)|,t)$,
the representer theorem 
\cite[Cor.\ 5.10]{SteinwartChristmann2008a} implies that
$$\big\|\gPDn-\gPPn\big\|_{H_2}\,\leq\,
  \lambda_{2,n}^{-1}
       \bigg\|\frac{1}{n}\sum_{i=1}^n h_{\P,n}\Phi_2
              -\Ex_{\P}h_{\P,n}\Phi_2
       \bigg\|_{H_2}\,.
$$
For 
$B:=\sup_{x,y}\|h_{\P,n}(x,y)\Phi_2(x)\|_{H_1}
 \leq 0.5\|k_2\|_\infty
$,
it follows from Hoeffding's inequality \cite[Chap.\ 3]{Yurinsky1995} 
and $\lambda_{2,n}^{2}n\rightarrow\infty$
that $\Delta_2^{(n)}=\big\|\gPDn-\gPPn\big\|_{H_2}$ 
converges to 0 in probability.

\textbf{Part 3:\,}
Since 
$\,\lim_{n\rightarrow\infty}\cR{L_0,\P}{\fPn}
 =\inf_{f\in\mathcal{L}_0(\cX)}\cR{L_0,\P}{f}
$
as shown in 
\cite[p.\ 338]{SteinwartChristmann2008a},it follows from 
Lemma \ref{lemma-L1-convergence-quantile}
that
\begin{eqnarray}\label{L1-convergence-of-SVM-to-cond-median} 
  \lim_{n\rightarrow\infty}\Delta_3^{(n)}\;=\;
  \lim_{n\rightarrow\infty}\|f_{\P}^\ast-\fPn\|_{L_1(\P_{\cX})}
  \;=\;0\;.
\end{eqnarray}
\textbf{Part 4:\,}
For every $g\in H_2$, define the approximation error function
(where we use the notation (\ref{def-risk-for-mad}))
$$A_g\,:\;L_1(\P_{\cX})\times\R\;\rightarrow\;\R\,,\quad\;
  (f,\lambda)\;\mapsto\;
  \LErisk(f,g)+\lambda\|g\|_{H_2}^2\,-\inf_{g_0\in\cL_0(\cX)}
    \LErisk\big(f_{0.5,\P}^\ast,g_0\big)\,.
$$
Note that the assumption $\Ex_{\P} |Y|<\infty$ implies that
$|A_g(f,\lambda)| <\infty$ such that $A_g$ is well defined.
It follows from the Lipschitz property 
(\ref{smoothed-pinball-lipschitz}) of $L_\varepsilon$
that 
$A_g$ is continuous for every $g\in H_2$ and, therefore, the map
$(f,\lambda)\mapsto\inf_{g\in H_2}A_g(f,\lambda)$ is upper 
semicontinuous. Hence, (\ref{L1-convergence-of-SVM-to-cond-median})
implies
$$\limsup_{n\rightarrow\infty}\Delta_4^{(n)}\;\leq\;
  \limsup_{n\rightarrow\infty}\inf_{g\in H_2}A_g(\fPn,\lambda_{2,n})
  \;\leq\;\inf_{g\in H_2}A_g(f_{0.5,\P}^\ast,0)\;=\;0\,,
$$
where the last equality follows, because
the assumption that $H_2$ is dense in $L_1(\P_{\cX})$
guarantees 
$\inf_{g_0\in\cL_0(\cX)}
    \LErisk\big(f_{0.5,\P}^\ast,g_0\big)
 =\inf_{g\in H_2}
    \LErisk\big(f_{0.5,\P}^\ast,g_0\big)
$
according to \cite[Theorem 5.31]{SteinwartChristmann2008a}.
\hfill$\blacksquare$

{\small \bibliography{christmann,hable}}
\end{document}